\newtheorem{proposition}{Proposition}
\newcommand{\hspc}{\mbox{\hspace{0.5em}}}           
\newcommand{\mbf}[1]{ {\mathbf #1} }
\newcommand{\tilmbf}[1]{ \tilde{\mathbf #1}}
\newcommand{\rmin}{\mbox{min}}
\newcommand{\vsp}{\rule{0pt}{1.1em}}
\newcommand{\hvsp}{\rule{0pt}{0.9em}}
\title{  MatSat: a matrix-based differentiable SAT solver
}
\author{
Taisuke Sato\inst{1}
\and
Ryosuke Kojima\inst{2}
}
\institute{
National Institute of Informatics (NII), Tokyo, Japan \\
\email{satou\_taisuke@nii.ac.jp}
\and
Graduate School of Medicine, Kyoto University, Japan \\
\email{kojima.ryosuke.8e@kyoto-u.ac.jp}\\
}
\authorrunning{Sato, Kojima}
\titlerunning{ MatSat: a matrix-based differentiable SAT solver }
\begin{document}

\maketitle

\begin{abstract}
We propose a new approach to  SAT solving which solves SAT problems in
vector spaces as  cost minimization of a  differentiable cost function
${\rm J}^{sat}$.  In our  approach, a solution, satisfying assignment,
of a  SAT problem in $n$  variables is represented by  a binary vector
$\mbf{u} \in  \{0,1\}^n$ such that  ${\rm J}^{sat}(\mbf{u}) =  0$.  We
search for  such $\mbf{u}$  in a vector  space $\mathbb{R}^n$  by cost
minimization, i.e., starting from  an initial $\mbf{u}_0$, we minimize
${\rm  J}^{sat}$  to  zero  while iteratively  updating  $\mbf{u}$  by
Newton's method.
We   implemented   our   approach  as   an   incomplete   matrix-based
differentiable SAT  solver MatSat.  Although existing  main-stream SAT
solvers decide each  bit of a solution assignment one  by one, be they
of conflict driven clause learning  (CDCL) type or of stochastic local
search (SLS) type,  MatSat fundamentally differs from them  in that it
updates all variables  at once and continuously  approaches a solution
in a vector space.
We conducted  experiments to  measure the  scalability of  MatSat with
random 3-SAT problems.   In these experiments, for  example, we showed
that  MatSat implemented  on GPU  can solve  the problem  with $n  = 3
\times  10^5$ variables,  demonstrating  the  feasibility of  hardware
acceleration by GPU for matrix-based solvers like MatSat.
We also  compared MatSat with  nine state-of-the-art CDCL and  SLS SAT
solvers  in terms  of execution  time by  conducting experiments  with
several random and  non-random data sets.  In the case  of easy random
SAT, the performance  of MatSat comes between the SLS  solvers and the
CDCL  solvers  whereas  it  is   ranked  1st  on  the  difficult  one.
On the  other hand, MatSat  showed poor performance on  non-random SAT
problems.   To improve  its poor  performance, we  introduced weighted
variables and clauses and confirmed  the effectiveness of the weighted
version of MatSat on non-random SAT.

\end{abstract}



%
%


\section{Introduction}
\label{sec:intro}

The  Boolean satisfiability  problem (SAT)  lies at  the core  of many
fields  including AI  and extensive  efforts have  been made  to build
powerful SAT solvers.  In this paper, we propose a new approach to SAT
solving  which  solves   SAT  problems  as  cost   minimization  of  a
differentiable  cost function  ${\rm J}^{sat}$  in vector  spaces that
contains    a     piecewise    linear    function     $\rmin_1(x)    =
\rmin(x,1)$\footnote{
This paper was presented at
Pragmatics of SAT:
a workshop of the 24nd International Conference on Theory and Applications of Satisfiability Testing
(PoS 2021).
}\footnote{
Although  $\rmin_1(x)$ is  non-differentiable at  $x=1$, we  apply the
term  ``differentiable''  to  it  in  a broader  sense  as  in  recent
approaches  in neural  network that  combine symbolic  computation and
neural computation.
} as  a  continuous  surrogate   for  disjunction\footnote{
Our use of $\rmin_1(x)$ originates  in Łukasiewicz's real valued logic
where the  truth value  $[A \vee B]$  of a disjunction  $A \vee  B$ is
evaluated as $[A \vee B] = \rmin([A]+[B],1)$.
}.

In our approach, a solution of a SAT problem\footnote{
We use ``SAT problem'' and ``SAT instance'' interchangeably.
}  in $n$  variables  is  represented by  a  0-1  vector $\mbf{u}  \in
\{0,1\}^n$ such that ${\rm J}^{sat}(\mbf{u})  = 0$, i.e., $\mbf{u}$ is
a root of  ${\rm J}^{sat}$.  We search for such  $\mbf{u}$ in a vector
space  $\mathbb{R}^n$ by  cost  minimization, i.e.,  starting from  a
random  $\mbf{u}_0$, we minimize  ${\rm  J}^{sat}$  to zero  while
iteratively updating $\mbf{u}$ using Newton's method.
We  implemented  our approach  as  a  matrix-based differentiable  SAT
solver MatSat.   While existing main-stream  SAT solvers search  for a
solution by determining each bit of a candidate assignment one by one,
be  they  of  conflict  driven  clause  learning  (CDCL)  type  or  of
stochastic local search (SLS)  type, MatSat fundamentally differs from
them  in  that it  updates  all  variables  at once  and  continuously
approaches a solution in a high  dimensional vector space with help of
gradient information.

In a  broader perspective,  MatSat is considered  as an  embodiment of
recently  emerging  differentiable   approaches  which  solve  logical
problems by  converting them to  differentiable forms in  a continuous
space.
We pick up  some of them here  as the background of our  work.  In the
field of  logic programming, Sato  finds that Datalog programs  can be
computed  orders of  magnitude  faster than  the traditional  symbolic
evaluation  by way  of matrix  equations \cite{Sato17b}.   Also it  is
shown that  inventing new  predicates can  be carried  out on  a large
scale  in a  vector space  by minimizing  a loss  function made  up of
matrices representing them \cite{Sato18}.   Sakama et al.\ developed a
matrix-based  approach   to  partial  evaluation  of   logic  programs
\cite{Sakama18}.

Aside  from these  logic-based  approaches,  there are  differentiable
approaches closer to neural networks.
Evans  and  Grefenstette  showed  how  to  incorporate  differentiable
components into an inductive logic programming framework by learning a
weight matrix associated  with candidate clauses for  a target program
to be synthesized \cite{EvansG18}.
Manhaeve  et  al.\  presented  a  neural  extension  of  probabilistic
modeling  language   ProbLog  allowing  ``neural   predicates''  whose
probabilities are computed by neural networks \cite{Manhaeve18}.
Cingillioglu and Russo described a  way of handling logical entailment
in  logic programs  by  their Iterative  Memory  Attention model  that
manipulates a vectorized  normal logic program and  a vectorized query
atom \cite{Cingillioglu18}.

The approaches described  above deal with logic in one  way or another
but  are  not  directly  aimed  at SAT  solving.   There  are  however
researchers aiming at differentiable SAT solvers.  For example,
Nickles incorporated  a differentiable evaluation function  into a SAT
solver to  choose a decision  literal for  sampling of answer  sets in
probabilistic answer set programming \cite{Nickles18b}.
Wang  et.\ al  built  a  MAXSAT solver  based  on  the combination  of
semidefinite  programming  relaxation  and  branch-and-bound  strategy
\cite{Wang19b}.
Selsam  et al.\  proposed a  neural  net classifier  NeuroSAT for  SAT
problems  that  learn  embeddings  of  a  SAT  problem  through  three
perceptrons and  two LSTMs so that  the system predicts one  bit, i.e.,
the satisfiability of the problem \cite{Selsam19}.

Compared  to these  differentiable  SAT solvers,  the architecture  of
MatSat is quite simple.  When viewed as a neural network, it is just a
one-layer  neural  network  applying  a  piecewise  linear  activating
function  and  sum-product operation  to  a  matrix describing  a  SAT
problem.  From  the viewpoint  of existing SAT  solvers, MatSat  is an
incomplete solver  unable to  solve UNSAT problems.   Nonetheless when
applied to SAT problems, it sometimes  outperforms state-of-the-art SAT
solvers      as      far      as     experimental      results      in
Section~\ref{sec:experiments}  are concerned\footnote{

Programs and data used for experiments in this paper are accessible from\\
{\tt https://drive.google.com/drive/folders/1xtqIcpvbW5USqb6CjvXK5zqcZpBqykts?usp=sharing}
}.
It is  also scalable, solving random 3-SAT problems  with up to
$3\times 10^5$ variables when implemented on GPU.

Our contributions thus include a  reformulation of SAT solving as cost
minimization  in vector  spaces with  a logic-based  cost function,  a
proposal  of matricized  differentiable SAT  solver MatSat  suited for
multicore and GPU architecture,  and experimental demonstration of the
strength of MatSat together with its weakness.

\section{Preliminaries}
\label{sec:preliminaries}
In this paper, bold italic capital  letters such as $\bm{A}$ stand for
a real matrix whereas bold italic  lower case letters such as $\bm{a}$
stand  for a  real  vector.   $\| \bm{a}  \|_1  = \sum_i  |\bm{a}(i)|$
denotes  the  1-norm of  a  vector  $\bm{a}$  and  $\| \bm{a}  \|_2  =
\sqrt{\sum_i \bm{a}(i)^2}$  is the  2-norm of  $\bm{a}$.  For  two $n$
dimensional  vectors $\bm{a}$  and $\bm{b}$,  we use  $(\bm{a} \bullet
\bm{b})$ to denote their inner product (dot product) and $\bm{a} \odot
\bm{b}$  to  denote  their  Hadamard  product,  i.e.,  $(\bm{a}  \odot
\bm{b})(i)  =  \bm{a}(i)\bm{b}(i)$ for  $i  (1\leq  i \leq  n)$.   The
dimension $n$ of  a vector $\bm{a}$ is sometimes called  the length of
$\bm{a}$. For a  scalar $\theta$, $\bm{a}_{<\theta}$  stands for a
binary vector  such that $\bm{a}_{<\theta}(i) = 1$  if $\bm{a}(i) < \theta$
and $\bm{a}_{<\theta}(i) = 0$ otherwise.  $\mbf{1}_m$
designates  an all-one  column  vector of  length  $m$.  $\rmin_1(x)  =
\rmin(x,1)$  designates the  lesser of  1 and  $x$.  $\rmin_1(\bm{a})$
means  the  component-wise application  of  $\rmin_1(x)$  to a  vector
$\bm{a}$.

\section{Matricized SAT}
\label{sec:SAT}
We encode a SAT instance in CNF form with $m$ clauses in $n$ variables
into an  $m \times  2n$ binary  matrix.  For example,  we encode  a SAT
instance $S_0 = (a\vee b\vee \bar{c})\wedge(\bar{a}\vee \bar{b})$ into
$\mbf{Q}_{0} \in \mathbb{R}^{2 \times 3}$ below.

\[
\mbf{Q}_{0} =
\begin{array}{c}
\begin{array}{cccccc}
  a & b & c & \bar{a} & \bar{b} & \bar{c} \\
\end{array}
\\
\left[
\begin{array}{cccccc}
  1 & 1 & 0 & 0 & 0 & 1 \\
  0 & 0 & 0 & 1 & 1 & 0 \\
\end{array}
\right]
\end{array}
\]\\

\noindent
The idea  behind $\mbf{Q}_{0}$ is  that when a  SAT instance $S  = C_1
\wedge\cdots\wedge   C_m$   of   $m$    clauses   in   $n$   variables
$\{a_1,\ldots,a_n\}$   is   given,    we   introduce   $2n$   literals
$\{a_1,\ldots,a_n, \neg a_1,\ldots,\neg a_n\}$  in this order together
with a binary matrix $\mbf{Q} \in  \{0,1\}^{m \times 2n}$ and use them
as a column  index for $\mbf{Q}$.  $\mbf{Q}$ represents $S$  in such a
way that $\bm{Q}(i,:)$, the $i$-th row in $\bm{Q}$, encodes the $i$-th
clause $C_i$  ($1 \leq i \leq  m$).  More concretely, when  the $i$-th
clause $C_i$ ($1 \leq i \leq  m$) contains a positive literal $a_j$($1
\leq  j  \leq  n$)  (resp.   negative  literal  $\neg  a_j$),  we  put
$\mbf{Q}(i,j)=1$      (resp.      $\mbf{Q}(i,j+n)=1$).       Otherwise
$\mbf{Q}(i,j)=  \mbf{Q}(i,j+n)  =  0$.   We  call  $\mbf{Q}$  an  {\em
  instance matrix\/} for $S$.
A truth value assignment  (interpretation) for $\{a_1,\ldots,a_n\}$ is
identified with  an {\em assignment  vector\/}, i.e., a  binary column
vector $\mbf{u} \in  \{0,1\}^n$ of length $n$ such  that $\mbf{u}(j) =
1$ (resp.  $0$) holds if-and-only-if the variable $a_j$ is true (resp.
false)  in  the  assignment  ($1  \leq  j  \leq  n$).   Henceforth  we
interchangeably use 1 for true and 0 for false.

Now  introduce the  {\em  dualized assignment  vector\/} $\mbf{u}^d  =
[\mbf{u};\mbf{1}_n-\mbf{u}]$   of  $\mbf{u}$.    It   is  a   vertical
concatenation  of  two column  vectors  $\mbf{u}$  and its  complement
$\mbf{1}_n-\mbf{u}$\footnote{
This  concatenation  of  assignment  vectors  is  first  introduced  in
\cite{Sato20}.
}.  Put $\delta_i  = \bm{Q}(i,:)\mbf{u}^d$.   It is
immediate to see that $\delta_i$ gives  the number of literals true in
$\mbf{u}$  contained   in  $C_i$.   So   if  $\delta_i  \geq   1$,  or
equivalently, if $\rmin_1(\delta_i)=1$, we know  that $C_i$ is true in
$\mbf{u}$.   Otherwise $\rmin_1(\delta_i)=0$  and  $C_i$  is false  in
$\mbf{u}$.   Thus  a  clause  $C_i$   is  evaluated  by  $\mbf{u}$  to
$\rmin_1(\delta_i) = \rmin_1(\bm{Q}(i,:)\mbf{u}^d) \in \{0,1\}$ and if
all    $\rmin_1(\delta_i)$'s   are    1,    in    other   words,    if
$\rmin_1(\bm{Q}\mbf{u}^d) =  \mbf{1}_m$ holds, $S$ is  satisfiable and
$\mbf{u}$ is  a solution,  i.e., satisfying assignment.   We summarize
discussion so far as Proposition~\ref{prop:1}.

\begin{proposition}
Let $\bm{Q} \in \{0,1\}^{m \times 2n}$  be an instance matrix for a SAT
instance  $S =  C_1\wedge\cdots\wedge  C_m$ with  $m$  clauses in  $n$
variables and $\mbf{u}$  an assignment for the variables  in $S$.  $S$
is satisfiable if-and-only-if $\rmin_1(\bm{Q}\mbf{u}^d) = \mbf{1}_m$.
\label{prop:1}
\end{proposition}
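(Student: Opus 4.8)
The proof is essentially a chain of elementary equivalences, so the plan is to unwind the definitions carefully and argue componentwise. The statement $\rmin_1(\bm{Q}\mbf{u}^d) = \mbf{1}_m$ is an equality of vectors of length $m$, so by definition of componentwise application of $\rmin_1$ it holds if-and-only-if $\rmin_1\big((\bm{Q}\mbf{u}^d)(i)\big) = 1$ for every $i$ with $1 \leq i \leq m$. I would therefore reduce the proposition to the per-clause claim: for each $i$, the clause $C_i$ is true under $\mbf{u}$ if-and-only-if $\rmin_1\big(\bm{Q}(i,:)\mbf{u}^d\big) = 1$, and then observe that $S = C_1 \wedge \cdots \wedge C_m$ is satisfied by $\mbf{u}$ exactly when all $C_i$ are, which is exactly the conjunction of the per-component conditions.

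For the per-clause claim, set $\delta_i = \bm{Q}(i,:)\mbf{u}^d$ as in the text. The key computation is to identify $\delta_i$ with the number of literals occurring in $C_i$ that are true under $\mbf{u}$. Writing out the dot product, $\delta_i = \sum_{j=1}^{n} \bm{Q}(i,j)\mbf{u}(j) + \sum_{j=1}^{n} \bm{Q}(i,j+n)\big(1 - \mbf{u}(j)\big)$. By the construction of the instance matrix, $\bm{Q}(i,j) = 1$ precisely when the positive literal $a_j$ occurs in $C_i$, and in that case $\bm{Q}(i,j)\mbf{u}(j)$ equals $1$ iff $a_j$ is true under $\mbf{u}$; symmetrically $\bm{Q}(i,j+n) = 1$ precisely when $\neg a_j$ occurs in $C_i$, and $\bm{Q}(i,j+n)(1-\mbf{u}(j))$ equals $1$ iff $\neg a_j$ is true under $\mbf{u}$. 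Since no literal is counted twice (the column indices $1,\ldots,n$ and $n+1,\ldots,2n$ are disjoint, and a clause normally contains each of $a_j$, $\neg a_j$ at most once), $\delta_i$ is exactly the count of true literals in $C_i$, hence a nonnegative integer. Consequently $\delta_i \geq 1$ iff $C_i$ has at least one true literal iff $C_i$ is true under $\mbf{u}$; and because $\delta_i$ is a nonnegative integer, $\rmin_1(\delta_i) = \rmin(\delta_i,1) = 1$ iff $\delta_i \geq 1$. Chaining these equivalences gives the per-clause claim.

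Finally I would assemble the pieces: $S$ is satisfied by $\mbf{u}$ iff every $C_i$ is true under $\mbf{u}$ iff $\rmin_1(\delta_i) = 1$ for all $i$ iff $\rmin_1(\bm{Q}\mbf{u}^d) = \mbf{1}_m$, and since such a satisfying $\mbf{u}$ exists precisely when $S$ is satisfiable, the proposition follows. I do not anticipate a real obstacle here; the statement is a definitional unpacking. The one point that deserves explicit care is the claim that $\delta_i$ genuinely equals the number of true literals rather than merely being positive when some literal is true — in particular that the two halves of $\mbf{u}^d$ do not produce spurious contributions or double-counting — and that $\delta_i$ is integer-valued so that the threshold behaviour of $\rmin_1$ at $1$ is clean. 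A secondary tacit assumption worth a remark is that clauses are sets of literals (no repeated literal within a clause); if one allowed multisets, $\delta_i$ would count with multiplicity, but the equivalence $\delta_i \geq 1 \Leftrightarrow C_i \text{ true}$ and hence the proposition would still hold.
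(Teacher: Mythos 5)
Your proof is correct and follows essentially the same route as the paper: identify $\delta_i = \bm{Q}(i,:)\mbf{u}^d$ with the number of literals of $C_i$ true under $\mbf{u}$, use the integer threshold behaviour of $\rmin_1$ at $1$, and read off the result componentwise. If anything, yours is more complete, since the paper's proof explicitly argues only the only-if direction (leaving the converse to the informal discussion preceding the proposition), whereas you establish the per-clause equivalence in both directions and also address the slight mismatch between the existential notion of satisfiability and the fixed assignment $\mbf{u}$ appearing in the statement.
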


\begin{proof}
We prove  the only-if  part.  Suppose  $S$ is  satisfiable and  has a
satisfying assignment $\mbf{u}$.   For every $i$ ($1 \leq  i \leq m$),
$C_i$ has  at least  one literal  true in  $\mbf{u}$.  So  $\delta_i =
\bm{Q}(i,:)\mbf{u}^d \geq 1$  and hence $\rmin_1(\bm{Q}(i,:)\mbf{u}^d)
= 1$ holds for every $i$, which means $\rmin_1(\bm{Q}\mbf{u}^d)$ is an
all-one vector $\mbf{1}_m$.
\end{proof}

Consider for example an assignment $\mbf{u}_0 = [1\; 0\; 0]^T$ for the
variables $\{a, b,  c \}$ in $S_0$.  Then we  have $\mbf{u}_0^d = [1\;
  0\; 0\; 0\; 1\; 1]^T$ and $\delta_1 = \bm{Q}_0(1,:)\mbf{u}_0^d = 2$.
Accordingly $\rmin_1(\delta_1)=1$  and the  first clause  $a\vee b\vee
\bar{c}$ in $S_0$ is true in $\mbf{u}_0$.  Likewise $\rmin_1(\delta_2)
= \rmin_1(\bm{Q}_0(2,:)\mbf{u}_0^d)  = 1$ and the  second clause $\neg
a\vee \neg b$ is true in $\mbf{u}_0$ as well.  So all clauses are true
in  $\mbf{u}_0$,  i.e.,  $\rmin_1(\bm{Q}_0\mbf{u}_0^d)  =  [1\;1]^T  =
\mbf{1}_2$  in notation  and $\mbf{u}_0$  is  a solution  for the  SAT
instance $S_0$.

\section{SAT solving by cost minimization}
\label{sec:cost}
According to  Proposition~\ref{prop:1}, SAT  solving is  equivalent to
finding   an  assignment   vector  $\mbf{u}   \in  \{0,1\}^n$   making
$\rmin_1(\bm{Q}\mbf{u}^d)  =   \mbf{1}_m$  true  where   $\mbf{Q}  \in
\{0,1\}^{m \times  2n}$ is an instance  matrix for $m$ clauses  in $n$
variables.  We  solve this problem  by cost minimization applied  to a
relaxed  equation  $\rmin_1(\bm{Q}{\tilmbf{u}}^d)   =  \mbf{1}_m$  for
$\tilmbf{u}     \in     \mathbb{R}^n$    where     $\tilmbf{u}^d     =
[\tilmbf{u};\mbf{1}_n  -  \tilmbf{u}]$.   We first  introduce  a  cost
function   ${\rm   J}^{sat}:  \tilmbf{u}   \in   \mathbb{R}^{n}\mapsto
\mathbb{R}$ below where $\ell>0$.
\begin{eqnarray}
{\rm J}^{sat}
   & = &  (\mbf{1}_m \bullet (\mbf{1}_m - \rmin_1(\mbf{Q}\tilmbf{u}^d)))
        + (\ell/2)\cdot\|\tilmbf{u}\odot(\mbf{1}_n - \tilmbf{u}) \|_2^2
            \label{eq:jsat}
\end{eqnarray}

\begin{proposition}
Let $\bm{Q} \in \{0,1\}^{m \times 2n}$ be an instance matrix for a SAT
instance $S$.   ${\rm J}^{sat} =  0$ if-and-only-if $\tilmbf{u}$  is a
binary vector $\in \{0,1\}^n$ and $\tilmbf{u}$ represents a satisfying
assignment for $S$.
\label{prop:2}
\end{proposition}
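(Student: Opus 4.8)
The plan is to prove both directions by exploiting that ${\rm J}^{sat}$ is a sum of two manifestly non-negative terms, so that ${\rm J}^{sat}=0$ forces each term to vanish separately. First I would note that $\mbf{1}_m - \rmin_1(\mbf{Q}\tilmbf{u}^d)$ is a vector whose entries are $1 - \rmin_1(\delta_i)$; since $\rmin_1(x) \le 1$ always and $\mbf{Q}\tilmbf{u}^d$ has non-negative entries when... actually I need to be slightly careful here, because for arbitrary $\tilmbf{u} \in \mathbb{R}^n$ the entries of $\tilmbf{u}^d$ and hence of $\mbf{Q}\tilmbf{u}^d$ need not be non-negative, so $\rmin_1(\delta_i)$ could be negative and the first term could be positive but the two terms could conceivably interact. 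So the cleaner route is: the second term $(\ell/2)\|\tilmbf{u}\odot(\mbf{1}_n-\tilmbf{u})\|_2^2$ is a sum of squares and is $\ge 0$, vanishing iff $\tilmbf{u}(j)(1-\tilmbf{u}(j)) = 0$ for all $j$, i.e.\ iff $\tilmbf{u} \in \{0,1\}^n$. For the first term, once we are on the event that the second term vanishes we have $\tilmbf{u} \in \{0,1\}^n$, hence $\tilmbf{u}^d \in \{0,1\}^{2n}$, hence $\delta_i = \bm{Q}(i,:)\tilmbf{u}^d \ge 0$, so $\rmin_1(\delta_i) \le 1$ and thus each entry $1 - \rmin_1(\delta_i) \ge 0$; therefore the first term $(\mbf{1}_m \bullet (\mbf{1}_m - \rmin_1(\mbf{Q}\tilmbf{u}^d)))$ is also a sum of non-negative quantities.

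For the if-direction: if $\tilmbf{u}$ is binary and represents a satisfying assignment, then by Proposition~\ref{prop:1} $\rmin_1(\bm{Q}\tilmbf{u}^d) = \mbf{1}_m$, so the first term is $(\mbf{1}_m \bullet \mbf{0}) = 0$; and binariness makes each $\tilmbf{u}(j)(1-\tilmbf{u}(j)) = 0$, so the second term is $0$; hence ${\rm J}^{sat} = 0$.

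For the only-if direction: suppose ${\rm J}^{sat} = 0$. I would first argue that the decomposition ${\rm J}^{sat} = T_1 + T_2$ with $T_2 \ge 0$ forces $T_1 \le 0$, but I actually want $T_1 = 0$ and $T_2 = 0$ individually, which needs a lower bound on $T_1$. The obstacle I anticipate is precisely this: a priori $T_1$ could be negative (if some $\delta_i > 1$ so $\rmin_1(\delta_i)$ is still capped at $1$, that's fine, but if some $\delta_i < 0$... no, $\rmin_1$ caps above not below, so $1 - \rmin_1(\delta_i) \ge 0$ always, regardless of the sign of $\delta_i$, since $\rmin_1(\delta_i) = \min(\delta_i,1) \le 1$). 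Good — so in fact $T_1 \ge 0$ unconditionally, because $\rmin_1(x) \le 1$ for every real $x$ makes every entry of $\mbf{1}_m - \rmin_1(\mbf{Q}\tilmbf{u}^d)$ non-negative, and $\mbf{1}_m$ has non-negative (indeed all-one) entries. Therefore ${\rm J}^{sat} = T_1 + T_2 = 0$ with $T_1, T_2 \ge 0$ gives $T_1 = T_2 = 0$. From $T_2 = 0$ we get $\tilmbf{u} \in \{0,1\}^n$ as above. From $T_1 = 0$: each summand $1 - \rmin_1(\delta_i) = 0$, i.e.\ $\rmin_1(\delta_i) = 1$ for all $i$, i.e.\ $\rmin_1(\bm{Q}\tilmbf{u}^d) = \mbf{1}_m$; combined with $\tilmbf{u} \in \{0,1\}^n$, Proposition~\ref{prop:1} (the if-part) tells us $S$ is satisfiable and $\tilmbf{u}$ is a satisfying assignment.

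So the structure is: establish $T_1 \ge 0$ (from $\rmin_1 \le 1$) and $T_2 \ge 0$ (sum of squares); conclude both are zero; read off binariness from $T_2 = 0$ and the satisfiability condition from $T_1 = 0$; invoke Proposition~\ref{prop:1} in both directions. The only genuinely delicate point — the one I would be careful to state explicitly — is that $1 - \rmin_1(\delta_i) \ge 0$ holds for \emph{all} real $\delta_i$, not just non-negative ones, so no case analysis on the sign of the entries of $\tilmbf{u}^d$ is needed; everything else is routine.
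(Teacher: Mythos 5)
Your proof is correct and follows essentially the same route as the paper: decompose ${\rm J}^{sat}$ into two non-negative terms, conclude both vanish, read off binariness from the penalty term and $\rmin_1(\mbf{Q}\tilmbf{u}^d)=\mbf{1}_m$ from the other, and invoke Proposition~\ref{prop:1} in both directions. The one point you spell out that the paper leaves implicit --- that $1-\rmin_1(\delta_i)\ge 0$ for \emph{every} real $\delta_i$ because $\rmin_1$ only caps from above, so no sign analysis of $\tilmbf{u}^d$ is needed --- is a worthwhile clarification but not a different approach.
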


\begin{proof}
Let  $\tilmbf{u}$ be  an assignment  vector satisfying  $S$.  Then  by
Proposition~\ref{prop:1},       we       have       $\mbf{1}_m       =
\rmin_1(\mbf{Q}\tilmbf{u}^d)$.  So  the first term in  ${\rm J}^{sat}$
is zero.   Since $\tilmbf{u}\in \{0,1\}^n$,  the second term  in ${\rm
  J}^{sat}$ is zero  as well. Hence ${\rm J}^{sat} =  0$.  Now suppose
${\rm  J}^{sat}  =  0$.   Since  every  term  in  ${\rm  J}^{sat}$  is
non-negative,    we   have    $(\mbf{1}_m    \bullet   (\mbf{1}_m    -
\rmin_1(\mbf{Q}\tilmbf{u}^d))) = 0$  and $\| \tilmbf{u}\odot(\mbf{1}_n
- \tilmbf{u})  \|_2^2 =  0$.   The latter  implies  $\tilmbf{u}$ is  a
binary   vector  whereas   the  former   implies  that   $\mbf{1}_m  -
\rmin_1(\mbf{Q}\tilmbf{u}^d)=\mbf{0}$,                           i.e.,
$\rmin_1(\mbf{Q}\tilmbf{u}^d)  =  \mbf{1}_m$ holds.   Therefore  every
clause in $S$ is true in $\tilmbf{u}$.
\end{proof}

Proposition~\ref{prop:2} tells  us that  a SAT solution  is a  root of
${\rm J}^{sat}$.   So we compute  it by  Newton's method\footnote{
We tested gradient descent minimization using Adam \cite{Kingma15}
but it turned out to be slower than Newton's method.
}. We  need the
Jacobian  $\mbf{J}_{acb}^{sat}$ of  ${\rm J}^{sat}$.   Split $\mbf{Q}$
into  $\mbf{Q}   =  [\mbf{Q}_1\,  \mbf{Q}_2]$  where   $\mbf{Q}_1  \in
\{0,1\}^{m  \times   n}$  is   a  binary  matrix   holding  occurrence
information about  positive literals  in each clause  while $\mbf{Q}_2
\in   \{0,1\}^{m    \times   n}$    is   the   one    about   negative
literals.  $\mbf{J}_{acb}^{sat}$ is  derived  as follows.

Put $\mbf{c} = \mbf{Q}\tilmbf{u}^d = \mbf{Q}_2\mbf{1}_n + (\mbf{Q}_1-\mbf{Q}_2)\tilmbf{u}$
and $\mbf{d}  = \tilmbf{u} \odot (\mbf{1}_n - \tilmbf{u}) \odot (\mbf{1}_n - 2\tilmbf{u})$.
Let $\tilmbf{u}_p = \tilmbf{u}(p)$ be the  $p$-th component  in  $\tilmbf{u}$   ($1  \leq  p  \leq  n$)  and
$\mbf{I}_p$ a  zero vector  except for the  $p$-th component  which is one.
Note $\frac{ \partial \tilmbf{u} }{ \partial \tilmbf{u}_p } = \mbf{I}_p$.
We  compute   the  partial   derivative  of   ${\rm  J}^{sat}$ w.r.t.\ $\tilmbf{u}_p$

\begin{eqnarray*}
\frac{ \partial {\rm J}^{sat} }{ \partial \tilmbf{u}_p }
  & = & (\mbf{1}_n \,\bullet\, (-(\mbf{c}_{<1}) \odot ((\mbf{Q}_1-\mbf{Q}_2)\mbf{I}_p)) ) + \ell\cdot (\mbf{d} \,\bullet\, \mbf{I}_p)\\
  & = & -( (\mbf{Q}_1-\mbf{Q}_2)^{T}(\mbf{c}_{<1}) \,\bullet\, \mbf{I}_p) + \ell\cdot (\mbf{d} \,\bullet\, \mbf{I}_p)\\
  & = & ( ((\mbf{Q}_2-\mbf{Q}_1)^{T}(\mbf{c}_{<1}) + \ell\cdot\mbf{d}) \,\bullet\, \mbf{I}_p). \\
\end{eqnarray*}

\noindent
Since $p$ is arbitrary, by  substituting original formulas for $\mbf{c}$ and
$\mbf{d}$ respectively, we reach

\begin{eqnarray}
\mbf{J}_{acb}^{sat}
  & = & (\mbf{Q}_2 - \mbf{Q}_1)^T(\mbf{Q}\tilmbf{u}^d)_{<1} \nonumber \\
  &   & \hspace{1em} +\; \ell\cdot(\tilmbf{u} \odot (\mbf{1}_n-\tilmbf{u}) \odot (\mbf{1}_n-2\tilmbf{u})).
            \label{jacob:sat}
\end{eqnarray}

\noindent
We  implement  Newton's method  by  starting  from an  initial  vector
$\tilmbf{u}_{ini}$ and updating $\tilmbf{u}$ by
\begin{eqnarray}
 \tilmbf{u}_{new} \;\leftarrow\;
      \tilmbf{u} - ({\rm J}^{sat}/\|\mbf{J}_{acb}^{sat} \|_F^2)\mbf{J}_{acb}^{sat}
         \label{update:matsat}
\end{eqnarray}
\noindent
until    convergence.      Note    that    the     updating   scheme
(\ref{update:matsat})\footnote{
This  is obtained  from solving  a  linear equation  ${\rm J}^{sat}  +
(\mbf{J}_{acb}^{sat}  \bullet (\tilmbf{u}_{new}  - \tilmbf{u}))  = 0$
w.r.t.\  $\tilmbf{u}_{new}$  which is  derived  from  the first  order
Taylor polynomial of ${\rm J}^{sat}$.
} has  the same form as  gradient descent. The difference  is that the
learning rate $\alpha = {\rm J}^{sat}/\|\mbf{J}_{acb}^{sat} \|_F^2$ is
not a constant and automatically adjusted as convergence proceeds.

To understand what the update scheme (\ref{update:matsat}) does, let's
look  at  the  first  term $\mbf{e}$  in  $\mbf{J}_{acb}^{sat}$  where
$\mbf{e} =  (\mbf{Q}_2 -  \mbf{Q}_1)^T(\mbf{Q}\tilmbf{u}^d)_{<1}$.  We
consider the case in which the second term in ${\rm J}^{sat}$ is small
and $\tilmbf{u}$  is close  to a  binary assignment  vector $\mbf{u}$,
i.e.\ $\tilmbf{u}  \approx \mbf{u}$.  Suppose  so. Let $S_{f}$  be the
set of clauses  in $S$ falsified by $\mbf{u}$. The  members of $S_{f}$
are  detected  as   a  component  being  one  in   the  binary  vector
$(\mbf{Q}\tilmbf{u}^d)_{<1}$.  Assume  that a variable  $v_p$ assigned
$\tilmbf{u}(p)$  occurs positively  $n_1$ times  and negatively  $n_2$
times respectively in the set $S_{f}$.  Then we have $\mbf{e}(p) = n_2
- n_1$.  Since $\mbf{J}_{acb}^{sat}(p) \approx \mbf{e}(p)$\footnote{
The second term in (\ref{jacob:sat}) is small because we assume
$\tilmbf{u}$  is close  to a binary vector.
}, we see that when $v_p$ negatively occurs more often than positively
in $S_{f}$, $\mbf{e}(p)>0$ and the update scheme (\ref{update:matsat})
will decrease $\tilmbf{u}_p$,  making it closer to $0$  (false).  As a
result, every clause  in $S_{f}$ containing $\neg v_p$  gets closer to
true,  which will  increase the  number of  satisfied clauses  in $S$.
Similar understanding is possible in the case of $\mbf{e}(p)<0$.

It is  interesting to  note that  this behavior  of the  update scheme
(\ref{update:matsat})  bears a  strong resemblance  to that  of an SLS
solver which flips a variable  in the currently unsatisfied clauses to
increase satisfied clauses, though our approach is continuous.

\begin{algorithm}[tb]
\caption{\hspace{1em} MatSat algorithm}
\label{alg:sat}
\mbox{\textbf{Input}: an instance matrix $\mbf{Q} \in \{0,1\}^{m \times 2n}$} \\
     \mbox{\hspace{2em} for a SAT instance $S$, integers max\_try and max\_itr} \\
\mbox{\textbf{Output}: an assignment vector $\mbf{u}$ for the variables in $S$ and} \\
      \mbox{\hspace{2em} $error$ = the number of unsatisfying clauses by $\mbf{u}$}
\begin{algorithmic}[1]
   \STATE {initialize $\tilmbf{u} \in \mathbb{R}^n$ by a uniform distribution {U}(0,1)}
   \FOR{$p = 1$ \textbf{to} max\_try}
      \FOR {$q = 1$ \textbf{to} max\_itr}
         \STATE {compute ${\rm J}^{sat}$ by (\ref{eq:jsat}) and $\mbf{J}_{acb}^{sat}$ by (\ref{jacob:sat}) }
         \STATE {update $\tilmbf{u}$ by (\ref{update:matsat})}
         \STATE {threshold $\tilmbf{u}$ to a binary vector $\mbf{u}$}
         \STATE {compute $error = \| \mbf{1}_m - \rmin_1(\mbf{Q}\mbf{u}^d) \|_1 $}
         \IF{$error = 0$} \STATE {exit $p$-loop} \ENDIF
      \ENDFOR
      \STATE {$\tilmbf{u} = (1-\beta)\cdot\tilmbf{u} + \beta\cdot\Delta$\hspace{1em} \% $0 \leq \beta \leq 1$, $\Delta \sim$ {U}(0,1)}
   \ENDFOR
   \STATE \textbf{return} $\mbf{u}$ and $error$
\end{algorithmic}
\end{algorithm}

We  implement our  cost  minimization  approach to  SAT  solving as  a
matrix-based differentiable solver MatSat whose algorithm is depicted in
{\bf Algorithm 1}.
We add some modifications to a basic cost minimization algorithm which
simply iterates  updating $\tilmbf{u}$  until convergence.   The first
one is  a thresholding operation  in line 6.   Theoretically iterating
the $q$-loop  until ${\rm J}^{sat} =0$  is desired but it  takes a long
time. So  we ``jump  to conclusion''  by thresholding  $\tilmbf{u}$ at
$\theta$    to    a    binary    assignment    vector    $\mbf{u}    =
\tilmbf{u}_{\geq\theta}$ so that $error$, the number of falsified clauses
by  $\mbf{u}$,  is  minimum.   Currently  we  apply  grid  search  for
$\theta$.    I.e.,  we   divide   the  interval   [min($\tilmbf{u}$)\;
  max($\tilmbf{u}$)]  into 200 levels  and  choose  the best  $\theta$
giving minimum error\footnote{
When  ${\rm  J}^{sat}(\tilmbf{u})$  becomes   close  to  zero,  so  is
$\|\tilmbf{u}\odot(\mbf{1}_n   -   \tilmbf{u})   \|_2^2$   and   hence
$\tilmbf{u}$ is close  to some binary vector  $\mbf{u}$.  We therefore
may  reasonably expect  from the  continuity of  ${\rm J}^{sat}$  that
${\rm J}^{sat}(\mbf{u})$ is an integer close to zero, or zero itself.
}.

The second one is a retry  with perturbation in line 12.  When $error$
does not  reach zero  within max\_itr iterations  in the  $q$-loop, we
give a  little perturbation to  $\tilmbf{u}$ to avoid a  local minimum
(by adding noise sampled from a uniform distribution to each component
of $\tilmbf{u}$) and  restart the next $q$-loop in  the $p$-loop.  The
$p$-loop  is repeated  at most  max\_try  times.  $\Delta$  is an  $n$
dimensional  vector   whose  component  is  sampled   from  a  uniform
distribution  over   $(0,1)$  and   $\beta$  adjusts  the   degree  of
perturbation.  For  example $\beta  = 1$ is  equivalent to  starting a
completely new $q$-loop and $\beta = 0$ to continuing update.  Usually
we use $\beta = 0.5$.

Time complexity per iteration within  $q$-loop is estimated as $O(mn)$
where $m$ is  the number of clauses and $n$,  the number of variables,
in a  SAT instance as only  multiplication of $m \times  n$ matrix and
vector of length $n$ is used in the algorithm.

\section{Experiments}
\label{sec:experiments}
To examine the viability of our cost minimization approach, we conduct
three experiments with MatSat\footnote{
The first and third experiments are carried out using GNU Octave 4.2.2
and Python  3.6.3 on a  PC with Intel(R) Core(TM) i7-10700@2.90GHz CPU.
}.  The first two are to examine the scalability of MatSat.  The third
one  is  a   comparison  of  MatSat  with   the  state-of-the-art  SAT
solvers\footnote{
\label{datasets}
MatSat  and all  data  sets used  for experiments  in  this paper  are
donwloadable from\\
{\tt https://drive.google.com/drive/folders/1xtqIcpvbW5USqb6CjvXK5zqcZpBqykts?usp=sharing}
}.

\subsection{Scalability}
We examine  how far MatSat works  when the problem size  gets large by
conducting two random 3-SAT experiments.   In the first experiment, we
use  random  SAT  instances   called  ``forced  uniform  random  3-SAT
instance''.  A forced  uniform random 3-SAT instance  with $m$ clauses
in $n$ variables is generated  as follows.  We first randomly generate
a   hidden   assignment   $a_h$   over   $n$   variables.    We   also
equiprobabilistically  generate  a  clause containing  three  mutually
different literals  and if it  is satisfied by  $a_h$, we keep  it. We
repeat this  process $m$ times  and convert  the resulting set  of $m$
clauses to an instance matrix $\mbf{Q} \in \{0,1\}^{m \times 2n}$, the
input for MatSat.

\begin{table}[th]
\caption{Scalability for random 3-SAT}
\label{tbl:3SAT}
\begin{center}
{\small
\begin{tabular}{crrrrrr}            \\[2pt] \hline\hline
\vsp$n$           & $10k$     & $20k$      & $40k$        & $60k$       & $80k$       & $100k$   \\
$m$               & \multicolumn{6}{c}{(\; $n \times 4.26$ \;)}   \\[2pt] \hline
\vsp\#instance    &   5    &   5     &   5     &   5    &  5     &  5     \\[1pt]
time(s)           & 13.0   & 44.8    & 143.8   & 281.0  & 406.2  & 503.7  \\[1pt]
std               &  3.7   & 12.9    &  23.6   & 32.3   & 45.8   & 62.0   \\[1pt] \hline\hline
\end{tabular}
}
\end{center}
\end{table}

While  varying $n$,  we  create  such SAT  instances  containing $m  =
n\times 4.26$ clauses, run MatSat on them with experimental parameters
set to max\_try = 100 and  max\_itr = 1000 and measure execution time.
In more  detail, for  each $n  \in \{10k,20k,40k,60k,80k,100k  \}$, we
generate five  forced uniform random  3-SAT instances containing  $m =
n\times  4.26$ clauses  in $n$  variables  and for  each instance,  we
repeat five trials of  SAT solving.  Table~\ref{tbl:3SAT} presents the
experimental  result.   In Table~\ref{tbl:3SAT},  ``\#instance'',  the
number of  generated instances  for each  $n$, is  fixed to  five.  25
trials are conducted in total for each $n$ and their average execution
time is recorded as ``time''.  We  add that MatSat finds a solution in
all of $25 \times 6 = 130$ SAT solving trials.

To  visualize   time  dependency,  we  plot   average  execution  time
w.r.t.\    $n$   with    std    (sample    standard   deviation)    in
Figure~\ref{fig:exectime}.   It  shows  almost  linear  dependency  of
execution  time  on  $n$  contrary to  the  expected  time  complexity
$O(n^2)$\footnote{
Because  the time  complexity of  update  in the  MatSat algorithm  is
$O(mn)$ and $m = n \times 4.26$ holds in this experiment.
}.   Although   what  causes   this  linearity  remains   unclear,  it
empirically  demonstrates the  scalability of  MatSat for random 3-SAT.

\begin{figure}[htb]
  \centering
  \includegraphics[width=8cm]{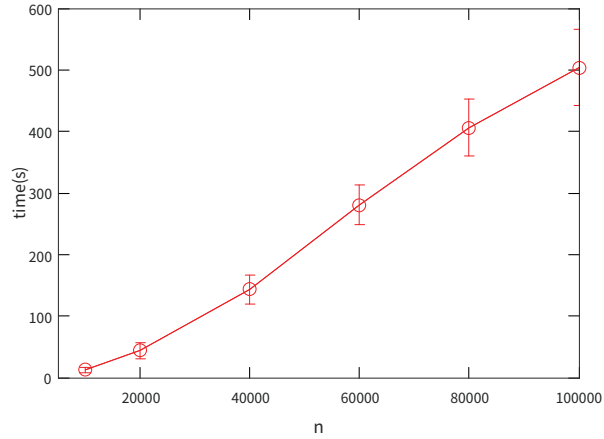}
  \caption{Execution time with random 3-SAT w.r.t. $n$ ($m=4.26n$)}
  \label{fig:exectime}
\end{figure}

We also conduct  the second experiment to test the  affinity of MatSat
with GPU  technology for large SAT  problems with $n$ up  to 300k.  We
reimplement MatSat using GPU (GeForce®  GTX 1080 Ti) as MatSat-GPU and
compare it with Sparrow2Riss-2018 which  is the winner of Random Track
in SAT  competition 2018 \cite{Heule18}  using large scale  random SAT
problems.

\begin{figure}[hbt]
  \centering
  \includegraphics[width=13.5cm]{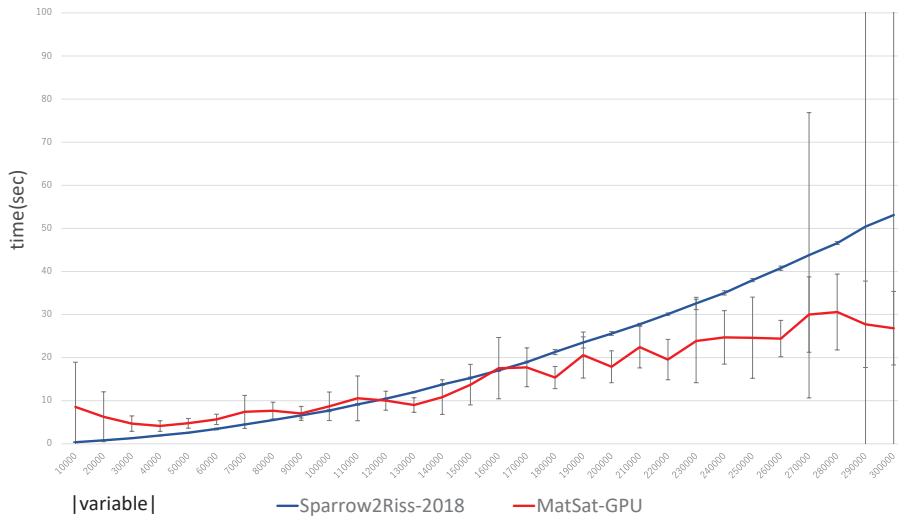}
  \caption{ MatSat-GPU(red) and Sparrow2Riss-2018(blue) up to $n$ = 300k }
  \label{fig:exectimeGPUMatSat}
\end{figure}

In the  experiment, as in  the previous experiment, while  varying $n$
from $n$  = $10k$  to $300k$  with $m =  4.26 \times  n$, we  run each
solver 10 times on random SAT instances generated from different seeds
and measure execution time\footnote{
We set max\_itr  = 1000, max\_try = 100 for  MatSat-GPU.  Given these
parameters  MatSat-GPU did  not terminate  7 times  in the  total $10
\times 30 = 300$ trials of SAT solving.  Default settings are used for
Sparrow2Riss-2018.
}.   Average   execution  time   by   each   solver  is   plotted   in
Figure~\ref{fig:exectimeGPUMatSat}(MatSat-GPU:        red       line,
Sparrow2Riss-2018: blue  line) with  std (sample  standard deviation).
First, we can see that MatSat-GPU runs dozens of times faster than the
CPU version of MatSat, depending  on the number of variables, compared
to the experiment in Figure 1.
We also see that the average execution time by MatSat-GPU slowly rises
w.r.t.  $n$ until $n=300k$ while  that of Sparrow2Riss-2018 rises more
sharply.   As  a  result, although  Sparrow2Riss-2018  initially  runs
faster  than  MatSat-GPU,  their  curves  cross  around  $n=170k$  and
MatSat-GPU runs faster than Sparrow2Riss-2018 thereafter.

What  is  more conspicuous  here  is  the  difference in  std,  sample
standard  deviation.  While  the std  of MatSat-GPU  is rather  stable
through all  $n$, Sparrow2Riss-2018 is  not.  Initially it  has almost
zero std but suddenly  has a large std in $n$ =  270k and this happens
again in $n$ = 290k and $n$  = 300k. The abrupt explosion of std seems
to  imply that  for Sparrow2Riss-2018,  most of  generated random  SAT
instances  are   easily  solvable   but  a  very   few  of   them  are
extraordinarily hard, causing abrupt explosion of std.

This   experiment   demonstrates   the   scalability   of   MatSat-GPU
w.r.t.\ random  3-SAT and at  the same  time the affinity  of MatSat's
matrix-based  architecture  for  GPU.   It  additionally  reveals  the
relative stability of  the behavior of MatSat in  SAT solving compared
to Sparrow2Riss-2018 when dealing with large random 3-SAT.

\subsection{Comparing MatSat with modern SAT solvers}

Here  we compare  MatSat with  the state-of-the-art  SAT solvers.
We consider four solvers from SAT competition 2018 \cite{Heule18,Heule19}\footnote{
{\tt https://helda.helsinki.fi/handle/10138/237063}
}, i.e.\
Sparrow2Riss-2018 (Random Track, winner),
gluHack (Random Track, ranked 2nd),
glucose-3.0\_PADC\_10\_NoDRUP (Random Track,  ranked 3rd) and
MapleLCMDistChronoBT (Main Track winner).
We also consider  MapleLCMDistChrBt-DL-v3 which is
the winner of SAT RACE 2019\footnote{
{\tt http://sat-race-2019.ciirc.cvut.cz}
}
and a basic solver
MiniSat 2.2 \cite{Een03}\footnote{
{\tt http://minisat.se/MiniSat.html}
}.
They are  all CDCL  solvers except  for Sparrow2Riss-2018  which is  a
hybrid of SLS solver (Sparrow) and CDCL solver (Riss) \cite{BM2014}.
We also compare MatSat with three SLS solvers.
They are
probSAT \cite{Balint12}\footnote{
{\tt https://github.com/adrianopolus/probSAT}
} which selects a varible based on a probability distribution,
YalSAT \cite{Balint14}\footnote{
{\tt yalsat-03s.zip} in {\tt http://fmv.jku.at/yalsat/}
}
which  is based  on probSAT  and the  winner of  2017 SAT  competition
Random Track
and
CCAnr 1.1 \cite{Cai15}\footnote{
{\tt https://lcs.ios.ac.cn/~caisw/SAT.html}
}
aiming  at  non-random  SAT   based  on  Configuration  Checking  with
Aspiration (CCA) heuristics.

First we compare MatSat with these nine solvers by random SAT.  We use
three data  sets: Set-A, Set-B  and Set-C.  Set-A contains  500 forced
uniform random 3-SAT instances.  Each instance is composed of $m=2130$
clauses in $n=500$ variables.
Set-B\footnote{
originally taken from {\tt /cnf/rnd-barthel} in {\tt http://sat2018.forsyte.tuwien.ac.at} {\tt /benchmarks/Random.zip}.
It is donwloadable from\\
{\tt https://drive.google.com/drive/folders/1xtqIcpvbW5USqb6CjvXK5zqcZpBqykts?usp=sharing}
}   is   a  random   benchmark   set   taken  from   SAT   competition
2018 \cite{Heule18,Heule19}.  It consists  of 55 random 3-SAT  instances each of
which contains $m$  clauses in $n$ variables where  $n$ = 200$\sim$400
and $m$ = 4.3n.
Set-C is  a small  set containing 10  random 5-SAT  instances selected
from a benchmark set\footnote{
originally taken from {\tt /cnf/Balint} in {\tt http://sat2018.forsyte.tuwien.ac.at} {\tt /benchmarks/Random.zip}.
It is donwloadable from\\
{\tt https://drive.google.com/drive/folders/1xtqIcpvbW5USqb6CjvXK5zqcZpBqykts?usp=sharing}
} submitted for SAT  competition 2018 \cite{Heule18,Heule19}.  Each instance is
generated as ``q-hidden formula'' \cite{Jia05} with 5279 clauses in 250
variables and intended to be hard to SLS solvers\footnote{
Jia et al.\ introduced ``q-hidden formula'' to generate a random k-SAT
satisfiable instance  where the  probability that  a literal  which is
made true by a hidden assignment  occurs positively in the instance is
equal to  the probability that  the same literal occurs  negatively in
the instance \cite{Jia05}.
}.

Experimental settings are as follows.   To run MatSat, we use max\_itr
= 500 and max\_try  = 100 for Set-A and Set-B and  max\_itr = 5000 and
max\_try =  2000 for Set-C. Set-A  is divided into five  sets of equal
size and we  run MatSat on each  of five divided sets  to obtain total
execution time  for 500  random 3-SAT instances.   MatSat successfully
returns a solution in all  cases.  For other solvers, default settings
are used.  timeout is set to 5000s and when it occurs it is counted as
5000s execution time.  We measure  average execution time per instance
for Set-A and Set-C and the total execution time for Set-B.

\begin{table}[hbt]
\caption{Execution time for random SAT}
\label{tbl:sat_time_random}
\begin{center}
\begin{tabular}{crrrc}                          \\ \hline\hline
\vsp    Solver                       & \multicolumn{4}{c}{ Data set } \\  \hline
\vsp                                 & Set-A       & Set-B           & \multicolumn{2}{c}{Set-C} \\
\cline{4-5}
\hvsp                                & time(s)*    &   time(s)       &  time(s)*        &  timeout/10 \\ \hline
\hvsp MatSat                         & 0.0679      &\hspc  18.8      &\hspc {\bf 697.7} & {\bf 0/10}  \\
\hvsp Sparrow2Riss-2018              & {\bf 0.0013}&\hspc   2.3      &\hspc 1544.1      &  3/10 \\
\hvsp gluHack                        & 34.5        &\hspc 537.4      &\hspc 5000.0      & 10/10 \\
\hvsp glucose-3.0\_PADC\_10\_NoDRUP  & 42.9        &\hspc 962.4      &\hspc 5000.0      & 10/10 \\
\hvsp MapleLCMDistChronoBT           & 0.34        &\hspc 519.1      &\hspc 5000.0      & 10/10 \\
\hvsp MapleLCMDistChronoBT\_DL\_v3   & 1.96        &\hspc 762.3      &\hspc 5000.0      & 10/10 \\
\hvsp MiniSat 2.2                    & 3.28        &\hspc  50.1      &\hspc 5000.0      & 10/10 \\
\hvsp probSAT                        & 0.0050      &\hspc  0.39      &\hspc 2134.8      & 2/10 \\
\hvsp YalSAT                         & 0.0058      &\hspc {\bf 0.34} &\hspc 1018.6      & 1/10 \\
\hvsp CCAnr 1.1                      & 0.0051      &\hspc  0.48      &\hspc  937.6      & {\bf 0/10} \\ \hline\hline
\end{tabular}
\end{center}
\end{table}

The  experimental result  is shown  in Table~\ref{tbl:sat_time_random}
with best figures in bold  face.  Here time(s)* denotes execution time
per instance. Set-C has the timeout column which denotes the number of
occurrences of timeout in solving Set-C.   We first observe that as far
as Set-A and Set-B are concerned, in terms of execution time, pure CDCL
solvers perform poorly\footnote{
Sparrow2Riss-2018 is not a pure SLS  or CDCL solver.  It first runs as
an SLS solver  and when a predefined amount of  $5\cdot10^8$ flips are
used up, it runs as a CDCL solver \cite{BM2014}.
} while  pure  SLS   solvers  perform  show  their   strength  and  the
performance  of  MatSat   is  just  between  the   two  groups.

More  interesting is  the  case of  Set-C which  is  generated by  the
q-hidden method \cite{Jia05} so that  the occurrence of literals in an
instance is  balanced, and hence  more difficult than random  SAT with
unbalanced occurrences of literals.
In the  case of  Set-C, all  conventional solvers,  CDCL or  SLS type,
cause  timeout except  for  CCAnr 1.1.   Even Sparrow2Riss-2018  which
performs best  for Set-A causes  timeout three times.   In particular,
all pure  CDCL solvers cause  timeout for every  instance. Consequently
MatSat is  ranked 1st on execution  time of Set-C.  We  point out that
MatSat  outperforms  all  pure  CDCL solvers.   This  observations  is
consistently  understandable  if  we  notice that  MatSat  bears  some
similarity to an SLS solver as explained in Section~\ref{sec:cost} and
usually an SLS solver is good at solving random SAT.\\

To  understand  more  deeply  how MatSat  can  outperform  these  CDCL
solvers, we examine the detail  of a typical distribution of execution
time in 100 forced random uniform 3-SAT instances\footnote{
Each instance is a set of $m=2130$ clauses in $n=500$ variables
(the same condition on the Set-B in Table~\ref{tbl:sat_time_random}).
} by MatSat  and three CDCL solvers  (MiniSat 2.2, MapleLCMDistChronoBT
and    MapleLCMDistChronoBT\_DL\_v3)    which     is    depicted    in
Figure~\ref{fig:distrtime}.

Observe       that        MatSat,       MapleLCMDistChronoBT       and
MapleLCMDistChronoBT\_DL\_v3 consume  very similar time  for instances
numbered from about 30 to  100 but concerning the remaining instances,
the latter  two take much  longer time  than MatSat.  In  other words,
about 30\% of the 100 random  SAT instances are extremely hard for the
CDCL solvers  while those  instances are  relatively easy  for MatSat,
which  explains  the  superiority  of average  performance  by  MatSat
compared  to  them.  Also  observe  that  the same  phenomenon  occurs
between MatSat and MiniSat 2.2 where MiniSat 2.2 suddenly takes longer
time  for instances  numbered less  than 5.   So we  may say  that the
execution time  by MatSat  is more  stable and  less sensitive  to the
difference in  individual instances  compared to other  solvers.  This
stability could  be ascribed to  the global nature of  MatSat's update
scheme (\ref{update:matsat})  which always  takes all components  in a
candidate  assignment  into  account unlike  the  sequential  discrete
update  (with almost  inevitable  backtracking) on  components in  the
candidate assignment made by a CDCL solver.\\

\begin{figure}[tbh]
  \centering
  \includegraphics[width=8.5cm]{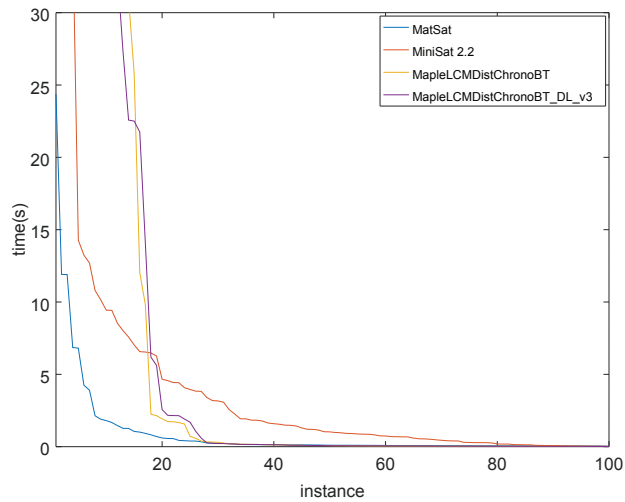}
  \caption{Distribution of execution time for 100 instances (n=500,m=2130)}
  \label{fig:distrtime}
\end{figure}

So far we have been focusing solely on random SAT instances which have
different characteristics  from non-random  ones.  So  we additionally
conduct  an   experiment  to  compare  ten   solvers  by  non-random
(structured)  SAT.

We prepare three benchmarks. The first one is Set-D\footnote{
{\tt SATLIB\_Flat\_Graph\_Colouring/flat30-60} in
{\tt https://www.cs.ubc.ca/{\textasciitilde}hoos/SATLIB/benchm.html}
} taken from  SATLIB \cite{Hoos00}.  It  contains 100  satisfiable 3-SAT
instances where each instance has $m=300$ clauses in $n=90$ variables.
The second one is Set-E\footnote{
{\tt SATLIB\_Morphed\_Graph\_Colouring/sw100-8-lp0-c5} in
{\tt https://www.cs.ubc.ca/{\textasciitilde}hoos/SATLIB/benchm.html}
} also  taken from  SATLIB \cite{Hoos00}.  It contains  100 satisfiable
5-SAT instances  where each instance  has $m=3100$ clauses  in $n=500$
variables.
The third one is Set-F  from  SAT  competition  2018   Main  Track \cite{Heule18,Heule19}\footnote{
\label{footnote_Set_F}
originally taken from {\tt Chen} in {\tt http://sat2018.forsyte.tuwien.ac.at}{\tt /benchmarks/Main.zip}.
It is donwloadable from\\
{\tt https://drive.google.com/drive/folders/1xtqIcpvbW5USqb6CjvXK5zqcZpBqykts?usp=sharing}.
}  submitted   by  Chen   \cite{Chen18},  which   encodes  relativized
pigeonhole   principle  (RPHP)   \cite{Elffers18}   and  contains   20
satisfiable 3-SAT  instances where each instance  has $m=1120$ clauses
in $n=320$ variables (concerning Set-F, it was initially assumed to be
non-random  SAT but  later we  are informed  by reviewers  that Chen's
benchmarks are not non-random and Set-F  should be counted as a random
SAT problem.  We are sorry that we were not aware of the random nature
of Set-F.   So the result  with Set-F should  be taken as  showing the
behavior of MatSat with random SAT like problems).

We run ten solvers, i.e.,
MatSat,
Sparrow2Riss-2018,
gluHack,
glucose-3.0\_PADC\_10\_NoDRUP,
MapleLCMDistChronoBT,
MapleLCMDistChronoBT\_DL\_v3,
MiniSat 2.2,
probSAT,
YalSAT
and
CCAnr 1.1.
Their experimental  settings are  as follows.  To  run MatSat,  we use
max\_itr = 500 and  max\_try = 100 for Set-D,
max\_itr = 2000 and  max\_try = 100 for Set-E and
max\_itr = 500 and  max\_try =  1000 for Set-F.
Given these  parameters, MatSat
successfully finds a  solution for all instances.   For other solvers,
default settings  are used.   timeout is set  to 5000s.   When timeout
occurs,  we equate  execution time  to  5000s.  We  measure the  total
execution time  for Set-D  and Set-E, and  average execution  time per
instance for Set-F.

\begin{table}[hbt]
\caption{Execution time for non-random and random SAT}
\label{tbl:sat_time_non_random}
\begin{center}
\begin{tabular}{crrrc}                          \\ \hline\hline
\vsp    Solver                       & \multicolumn{4}{c}{ Data set } \\  \hline
\vsp                                 & Set-D       & Set-E           & \multicolumn{2}{c}{Set-F} \\
\cline{4-5}
\hvsp                                & time(s)     &  time(s)        & time(s)*       &  timeout/20 \\ \hline
\hvsp MatSat                         & 9.29        &\hspc 679.7      &\hspc {\bf 2.0} & {\bf 0/20}  \\
\hvsp Sparrow2Riss-2018              & 0.23        &\hspc 0.26       &\hspc 251.9     &  1/20       \\
\hvsp gluHack                        & 0.32        &\hspc 0.58       &\hspc 5000.0    & 20/20       \\
\hvsp glucose-3.0\_PADC\_10\_NoDRUP  & 0.45        &\hspc 0.63       &\hspc 4346.2    & 17/20       \\
\hvsp MapleLCMDistChronoBT           & 0.22        &\hspc 0.42       &\hspc 3643.7    & 12/20       \\
\hvsp MapleLCMDistChronoBT\_DL\_v3   & 0.24        &\hspc 0.42       &\hspc 3205.7    & 10/20       \\
\hvsp MiniSat 2.2                    & {\bf 0.21}  &\hspc {\bf 0.21} &\hspc 4345.8    & 15/20       \\
\hvsp probSAT                        & 0.46        &\hspc  0.46      &\hspc  14.3     & {\bf 0/20}  \\
\hvsp YalSAT                         & 0.39        &\hspc  0.45      &\hspc  41.3     & {\bf 0/20}  \\
\hvsp CCAnr1.1                       & 0.39        &\hspc  0.50      &\hspc  14.2     & {\bf 0/20}  \\ \hline\hline
\end{tabular}
\end{center}
\end{table}

Table~\ref{tbl:sat_time_non_random} shows the result with best figures
in  bold face.   time(s)*  in  the table  denotes  execution time  per
instance.  Set-F  has the timeout  column which denotes the  number of
occurrences of timeout in solving 20 instances in Set-F.

MatSat  performs  very poorly  on  Set-D  and Set-E
compared  to other  solvers.  However,  Set-F shows  quite different
performance
(as mentioned before, Set-F should be considered as a random SAT problem).
As   the   timeout   column   in
Table~\ref{tbl:sat_time_non_random}  tells,   MatSat  and   three  SLS
solvers  (probSAT,   YalSAT,  CCAnr   1.1)  causes  no   timeout,  but
Sparrow2Riss-2018  causes timeout  once  and other  four CDCL  solvers
cause timeout ten  times or more.  Also while none  of the SLS solvers
causes timeout, they take much longer time than MatSat to solve Set-F.
As a reuslt,  MatSat outperforms all nine solvers  that participate in
the experiment.

Although we  cannot stretch experimental  results in this  section too
far, we  may say that MatSat  behaves differently in SAT  solving from
existing SAT  solvers, sometimes  giving an advantage  in the  case of
random SAT.

\section{Weighted variables and clauses}
The  experimental  results  in   the  previous  section  uncover  poor
performance of MatSat on non-random  SAT. We here propose to introduce
weighted variables and clauses to  improve the situation.  The idea is
to introduce a real number, {\em variable weight\/}, for each variable
to tell the  solver which variable is important.  Also  we introduce a
real number, {\em clause weight\/},  for each clause.  Those variables
and clauses that have large weights  have the priority to be optimized
by the  solver just like  unit clauses  are specially treated  by unit
propagation.

More  formally, when  an  instance matrix  $\mbf{Q}(m  \times 2n)$  is
given, we introduce a variable  weight vector $\mbf{w}_v (1 \times n)$
and a clause weight vector $\mbf{w}_c (m \times 1)$ w.r.t.\ $\mbf{Q}$.
Let   $a_i$  $(1\leq   i  \leq   n)$  be   the  $i$-th   variable  and
$\mbf{w}_v^*(i)$, the  $i$-th element of $\mbf{w}_v^*$,  be the number
of  total occurrences  of $a_i$  in  the SAT  instance represented  by
$\mbf{Q}$.  We define  the variable weight $\mbf{w}_v  (i)$ for $a_i$
as $\mbf{w}_v (i) = \mbf{w}_v^*(i)/{\rm ave}(\mbf{w}_v^*)$ where ${\rm
  ave}(\mbf{w}_v^*)$ is the average  of elements in $\mbf{w}_v^*$.  We
next define a clause weight $\mbf{w}_c(j)$ $(1\leq j \leq m)$ for the
$j$-th clause $C_j$ represented by the  $j$-th row of $\mbf{Q}$ as the
sum  of  weights of  variables  in  $C_j$.   Finally we  define  ${\rm
  J}^{sat\_w}$, a weighted version of ${\rm J}^{sat}$, by

\begin{eqnarray}
{\rm J}^{sat\_w}
   & = &  (\mbf{1}_m \bullet  (\mbf{w}_c \odot (\mbf{1}_m - \rmin_1(\mbf{Q}\tilmbf{u}^d))))
        + (\ell/2)\cdot\|\mbf{w}_v^T \odot \tilmbf{u}\odot(\mbf{1}_n - \tilmbf{u}) \|_2^2.    \label{eq:jsatw}
\end{eqnarray}
Since $(\mbf{1}_m  - \rmin_1(\mbf{Q}\tilmbf{u}^d))(j)$ is  the falsity
of  the  $j$-th clause  $C_j$,  being  multiplied  by a  large  weight
$\mbf{w}_c(j)$ in  ${\rm J}^{sat\_w}$ means the  continuous falsity of
$C_j$ is more strongly minimized  than other clauses in learning.  Put
$\mbf{Q}    =     [\mbf{Q}_1\,    \mbf{Q}_2]$     where    $\mbf{Q}_1$
(resp. $\mbf{Q}_2$)  represents the positive literals  (resp. negative
literals)   of   $\mbf{Q}$.    Then  ${\rm   J}^{sat\_w}$'s   Jacobian
$\mbf{J}_{acb}^{sat\_w}$ is computed as follows (derivation omitted).
\begin{eqnarray}
\mbf{J}_{acb}^{sat\_w}
   & = &  (\mbf{Q}_2 - \mbf{Q}_1)^T(\mbf{w}_c  \odot [(\mbf{Q}\tilmbf{u}^d) \leq \mbf{1}_m])
        + \ell\cdot(\mbf{w}_v^T \odot \mbf{w}_v^T \odot \tilmbf{u}
              \odot (\mbf{1}_n - \tilmbf{u}) \odot (\mbf{1}_n - 2\tilmbf{u}) )                \label{eq:jacbw}
\end{eqnarray}

We conduct a small experiment on non-random SAT using this weighted version of MatSat.
It minimizes ${\rm J}^{sat\_w}$ (\ref{eq:jsatw}) by Newton's method using (\ref{eq:jacbw}).
In addition to Set-E and Set-F, we use another test set Set-G
from SATLIB \cite{Hoos00} in the experiment\footnote{
It is taken from SATLIB\_MorphGraph/sw100-8-lp1-c5
{\tt (https://www.cs.ubc.ca/{\textasciitilde}hoos/SATLIB/benchm.html)}.
} containing 100 instances of 5-SAT with $m=3100$ clauses in $n=500$ variables.
The experimental result is summarized in Table~\ref{tbl:weight_non_random}.

\begin{table}[hbt]
\caption{Learning time(s) by the weighted and non-weighted MatSat on non-random SAT instances}
\label{tbl:weight_non_random}
\begin{center}
\begin{tabular}{cccc}                          \\ \hline\hline
\vsp   MatSat               &    Set-D    &    Set-E    &  Set-G     \\ \hline
\vsp   non-weighted         &     9.29    &    679.7    &  1277.3    \\
\vsp  (max\_itr max\_try)   &  (500 100)  &  (2k 100)   &  (5k 100)  \\ \hline
\vsp   weighted             &   2.18      &     82.4    &   123.7    \\
\vsp  (max\_itr max\_try)   &  (100,300)  &   (1k,100)  &  (200,100) \\ \hline\hline
\end{tabular}
\end{center}
\end{table}

Comparing the non-weighted version and the weighted version, we notice
that  the weighted  MatSat  learns  much faster,  almost  an order  of
magnitude faster for Set-G, than the non-weighted MatSat, with a small
number of iterations.   Although this experiment is  small, it attests
the  effectiveness of  weighted variables  and clauses  for MatSat  to
control  the learning  process and  suggests  one way  to improve  the
performance of MatSat on non-random SAT.

\section{Related Work}
\label{sec:related_work}
SAT  is a  central  discipline  of computer  science  and the  primary
purpose of this paper is to  bring a differentiable approach into this
established field from the viewpoint of integrating symbolic reasoning
and  numerical  computation.  But,  first,  we put  our  work  in  the
traditional context.

According  to  a comprehensive  survey  \cite{Gu96},  our proposal  is
categorized as continuous-unconstrained  global optimization approach.
We note that a variety of global optimization approaches are described
in \cite{Gu96}  but all encode  clauses as  a product of  some literal
functions as in \cite{Gu96b}. As  a result, the objective function for
$n$ variables becomes a multivariate  polynomial of degree possibly up
to $2n$.  Contrastingly in our approach, clauses are not functions but
constant vectors collectively  encoding a SAT instance  as an instance
matrix,  and far  more importantly,  our objective  function is  not a
simple multivariate  polynomial but a  sum of two terms  with distinct
roles as  explained next.  First  recall our objective  function ${\rm
  J}^{sat}$  is of  the form:  ${\rm  J}^{sat} =  f  + r$  where $f  =
(\mbf{1}_m \bullet (\mbf{1}_m  - \rmin_1(\mbf{Q}\tilmbf{u}^d)))$ using
$\rmin_1(x)$\footnote{
As mentioned  before, the use of $\rmin_1(x)$ comes from logic and
mimics the  evaluation of disjunction.
}and  $r  =   (\ell/2)\cdot\|\tilmbf{u}\odot(\mbf{1}_n  -  \tilmbf{u})
\|_2^2$\footnote{
%
It is possible to replace $l_2$ norm by $l_1$ norm but a tentative
implementation lead to non-smooth convergence.
}.  While  $r$ is just  a multivariate polynomial of  constant degree,
four, working as a penalty term to force 0-1 vectors, $f$ is essential
and navigates  SAT solution  search as a  continuous surrogate  of the
number of false clauses.  Unlike various UniSAT models in \cite{Gu96},
it  is not  a  product of  functions  but a  sum  of piecewise  linear
functions of  $\tilmbf{u}$, which  makes all  the difference  from the
viewpoint  of optimization.   In  particular, when  all components  of
$\mbf{Q}\tilmbf{u}^d$   are  less   than   one,   it  becomes   linear
w.r.t.\ $\tilmbf{u}$.  So when $\tilmbf{u}$ is distant from a solution
and $\mbf{Q}\tilmbf{u}^d$  which is a continuous  approximation to the
number of {\em  true literal\/}s in clauses is  small, the convergence
of $f$ is likely to be fast due to the linearity of $f$.

Now we  focus on related  works from  the viewpoint of  of integrating
symbolic   reasoning   and   numerical  computation.    As   explained
Section~\ref{sec:intro}, there has been a steady stream of tensorizing
logic  programming  \cite{Sato17b,Sato18,Sakama18,Kojima19,Sato20}  in
which for  example, Sato  experimentally implemented  a differentiable
SAT solver in Octave and compared its performance with one CDCL solver
\cite{Sato20}.

There  is also  a trend  of  combining neural  computation with  logic
programming.   For  example,  DeepProbLog  allows the  user  to  write
``neural predicates'' computed by a neural network and the probability
of a  query is computed just  like usual ProbLog programs  while using
probabilities  computed by  the  neural predicates  \cite{Manhaeve18}.
T-PRISM replaces the basic data structure with tensor computation in a
logic-based probabilistic  modeling language PRISM  \cite{Sato01g} and
makes it possible to learn neural networks \cite{Kojima19}.

Direct attempts at  a differentiable SAT solver include  the work done
by Nickles  that integrates the derivative  of a loss function  with a
traditional SAT  solver \cite{Nickles18b}.   Wang shows that  a MAXSAT
can be  incorporated into  a convolutional  neural network  to realize
end-to-end learning of visual  Sudoku problems \cite{Wang19a}.  Selsam
et al.'s  neural net classifier NeuroSAT  achieves 85\% classification
accuracy on moderate size problems with 40 variables and could extract
solutions 70\% of satisfiable cases \cite{Selsam19}.

Last  but not  least, there  are some  works on  the use  of GPU,  for
example,  for  simplification  \cite{Osama19}  and  for  CDCL  solvers
\cite{Osama21}.   In the  latter  case,  Osama et  al.\  used GPU  for
inprocessing such as garbage collection and simplification.

\section{Conclusion}
\label{sec:conclusion}
We proposed  a simple differentiable SAT  solver MatSat.  It is  a new
type of SAT solver, not a CDCL solver or an SLS solver. It carries out
SAT solving in  vector spaces as cost minimization of  a cost function
${\rm   J}^{sat}(\tilmbf{u})$   w.r.t.\    a   continuous   assignment
$\tilmbf{u}$ applied to  a binary matrix $\mbf{Q}$  representing a SAT
instance.   We empirically  confirmed the  scalability of  MatSat.  In
particular, it  is shown that  when implemented  by GPU, it  can solve
random 3-SAT  instances of  various sizes  up to $n  = 3  \times 10^5$
variables.   We then  compared  MatSat with  nine  modern SAT  solvers
including representative  CDCL and SLS  solvers in terms  of execution
time.  When  applied to three  data sets  of random SAT  instances, it
outperforms all  CDCL solvers but  is outperformed by all  SLS solvers
for two easy sets. On the  other hand concerning the data set designed
to be hard for SLS solvers,  it outperforms all nine solvers.  We also
conducted  a  comparative  experiment  with non-random  SAT  sets  and
discovered that MatSat performs poorly  on these sets.  Apparently the
number of experiments is not enough  but these results seem to suggest
the unique behavior of MatSat different from SLS and CDCL solvers.

As future work, we need to conduct more experiments with various types
of SAT  instances including (unforced) uniform  random k-SAT instances
and structured ones.  It is our plan to improve GPU implementation and
introduce heuristics for parameters $\ell$ in (\ref{eq:jsat}) and $\beta$ in {\bf
  Algorithm 1}.  We also plan  to reconstruct MatSat for MAXSAT, which
seems straightforward because $\mbf{v} = \rmin_1(\mbf{Q}\mbf{u}^d)$ in
${\rm J}^{sat}$ computed by MatSat  is actually an indicator vector of
clauses satisfied  by an assignment  vector $\mbf{u}$ and  $\| \mbf{v}
\|_1$ is the number of satisfied clauses.  Hence by relaxing $\mbf{u}$
to $\tilmbf{u}  \in \mathbb{R}^n$ and concurrently  relaxing $\mbf{v}$
to  $\tilmbf{v}\in  \mathbb{R}^n$,  we  would  have  a  differentiable
formulation of MAXSAT that optimizes $\| \tilmbf{v} \|_1$.

MatSat  is at  the cross-roads  of SAT  solving, machine  learning and
(propositional) logic and also closely related to neural networks.  It
is an instance of differentiable approaches emerging in various fields
ranging from neural networks to  logic programming that solve symbolic
problems  in a  differentiable framework  in a  continuous space.   We
expect that  MatSat opens  a new approach  to building  a powerful
GPU-based SAT solver.

\section*{Acknowledgments}
We thank  Professor Katsumi  Inoue and  Koji Watanabe  for interesting
discussions and  useful information  on experimental data.   This work
was supported by the New  Energy and Industrial Technology Development
Organization   (NEDO)  and   JSPS   KAKENHI   Grant  No.17H00763   and
No.21H04905.



\end{document}